\documentclass[letterpaper, 10 pt, conference]{ieeeconf}  
\IEEEoverridecommandlockouts             
\overrideIEEEmargins

\usepackage{amssymb} %

\usepackage{bm}
\usepackage{enumerate}
\usepackage{commath}
\usepackage{graphicx}
\graphicspath{{Pictures/}}
\usepackage{amsmath}
\usepackage{mathtools}

\makeatletter
\let\MYcaption\@makecaption
\makeatother

\usepackage[font=footnotesize]{subcaption}

\makeatletter
\let\@makecaption\MYcaption
\makeatother

\usepackage{breqn}
\usepackage{cite}
\usepackage[table]{xcolor}
\usepackage{booktabs}
\usepackage{empheq}
\usepackage{amsfonts}

\usepackage{amsthm}
\usepackage{hyperref}
\usepackage{xcolor}
\usepackage{mathrsfs}
\usepackage{accents}
\usepackage{thmtools}
\usepackage{thm-restate}
\usepackage{float}
\usepackage{xcolor}
\usepackage[normalem]{ulem}

\usepackage{algorithm}
\usepackage{algpseudocode}
\usepackage{dblfloatfix}

\usepackage{etoolbox}
\usepackage{float}  %

\usepackage{comment}
\usepackage{balance}

\theoremstyle{plain}

\newtheorem{lemma}{Lemma}

\newtheorem*{theorem*}{Theorem}
\newtheorem{assumption*}{Assumption}

\declaretheorem[name=Theorem]{thm}

\theoremstyle{definition}

\newtheorem*{problem*}{Problem}

\newcommand{\myset}[1]{\mathcal{#1}}

\DeclareMathOperator*{\argmin}{argmin}

\pdfminorversion=4

\title{ \LARGE \bf
    Secure Safety Filter: Towards  Safe Flight  Control under Sensor Attacks
}

\author{Xiao Tan$^1$, Junior Sundar$^2$, Renzo Bruzzone$^2$, Pio Ong$^1$, Willian T. Lunardi$^2$, \\ Martin Andreoni$^2$, Paulo Tabuada$^3$,  and Aaron D. Ames$^1$ %
\thanks{This work is supported by TII under project \#A6847.}
\thanks{$^1$Xiao Tan, Pio Ong, and Aaron D. Ames are with the Department of Mechanical and Civil Engineering, California Institute of Technology, Pasadena, CA 91125, USA (Email: {\tt\small xiaotan, pioong, ames@caltech.edu}).} 
\thanks{$^2$Junior Sundar, Renzo Bruzzone, Willian T. Lunardi, and Martin Andreoni are with the Secure Systems Research Center at Technology Innovation Institute, Abu Dhabi, U.A.E. (Email: {\tt\small junior.sundar, renzo.bruzzone, willian.lunardi, martin.andreoni@tii.ae}).}
\thanks{$^3$Paulo Tabuada is with the Department of Electrical and Computer Engineering at University of California, Los Angeles, CA 90095, USA (Email: {\tt\small  tabuada@ucla.edu}).} 
}

\begin{document}

\maketitle
\thispagestyle{plain}
\pagestyle{plain}

\begin{abstract}
Modern autopilot systems are prone to sensor attacks that can jeopardize flight safety. To mitigate this risk, we proposed a modular solution: the \textit{secure safety filter}, which extends the well-established control barrier function (CBF)-based safety filter to account for, and mitigate, sensor attacks. This module consists of a secure state reconstructor (which generates plausible states) and a safety filter (which computes the safe control input that is closest to the nominal one). Differing from existing work focusing on linear, noise-free systems, the proposed secure safety filter handles bounded measurement noise and, by leveraging reduced-order model techniques, is applicable to the nonlinear dynamics of drones. Software-in-the-loop simulations and drone hardware experiments demonstrate the effectiveness of the secure safety filter in rendering the system safe in the presence of sensor attacks.

  \end{abstract}

\section{Introduction}
\label{sec:intro}

Drones with autopilot capabilities are being deployed for a variety of tasks, from package delivery in urban areas to environmental surveillance in the wild. Most drones rely on onboard sensor information, such as GPS, Inertia Measurement Unit (IMU), and cameras, to achieve waypoint tracking and collision avoidance. The reliance on onboard sensors makes them vulnerable to sensor spoofing attacks. One notable example was the hijacked RQ-170 incident in which the GPS signal was believed to be maliciously manipulated \cite{rq170incident}. Many other physical attack strategies on different types of sensors have also been reported in the literature \cite{amin2010stealthy,liu2011false,kwon2013security}. For example, readings from an IMU sensor can be altered using acoustic signals \cite{trippel2017walnut}. Sensor attacks can also be initiated in the cyber domain, such is the case when using compromised hardware drivers or communication buses. Regardless of where the sensor attacks emerge, they will have a catastrophic impact on the downstream navigation and control stack: guaranteeing drone safety with a tolerance for possible sensor spoofing attacks is becoming a real and pressing challenge.

In this work, we approach the secure and safe flight control problem from a zero-trust sensor fusion perspective: we assume no \textit{a priori} trust on any onboard sensor and propose a secure safety filter module with guaranteed flight safety under possible sensor attacks from an adversarial party. The proposed scheme is shown in Fig. \ref{fig:control_diagram}. The secure safety filter consists of a safe state reconstructor and a safety filter: the former cross-examines past sensor measurements and determines plausible states at the current time. The safety filter  \cite{Ames2017,Ames2019control,wabersich2023data} removes unsafe control signals from a nominal signal, accounting for all plausible states. Using a reduced-order model, we also demonstrate how to apply the theoretical results developed for linear systems to drones. The effectiveness of our proposed secure safety filter is shown both in high-fidelity simulations and hardware experiments.

\begin{figure}[t]
\vspace{5pt}
    \centering
    \includegraphics[width=1.0\linewidth]{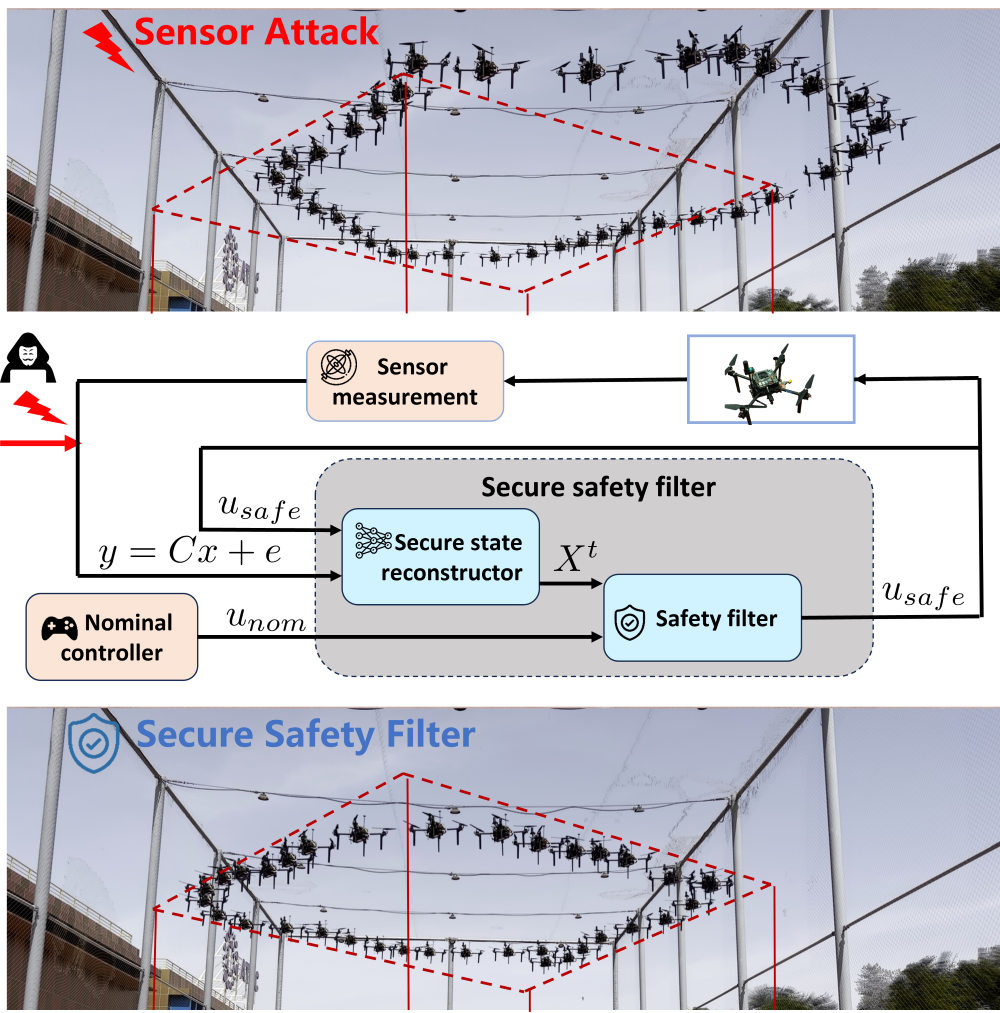}
    \caption{Diagram showing the architecture of the secure safety filter operating in the presence of sensor attacks (middle). An attacker tries to induce unsafe behaviors on the drone by injecting attack signals into sensor measurement. The secure safety filter takes past (potentially corrupted) sensor measurements and input sequences, yielding a safe control signal with the least deviation from the nominal control signal.  Hardware experiments on a drone verify the approach, with the unsafe behavior when it is under sensor attack (top) and safe behavior with the secure safety filter even when experiencing a sensor attack (bottom). }
    \label{fig:control_diagram}
    \vspace{-10pt}
\end{figure}

\begin{figure*}[b]
\small
    \begin{equation} \label{eq:matrices} \tag{4}
    \widetilde{Y}_i = \begin{bsmallmatrix}
        y_i(t-l) \\
        y_i(t-l+1) \\
        \vdots \\
        y_i(t)
    \end{bsmallmatrix},
    E_i = \begin{bsmallmatrix}
        e_i(t-l) \\
        e_i(t-l+1) \\
        \vdots \\
        e_i(t)
    \end{bsmallmatrix},
    U = \begin{bsmallmatrix}
        u(t-l) \\
        u(t-l+1) \\
        \vdots\\
        u(t-1) \\
        0
    \end{bsmallmatrix},
    \mathcal{O}_i = \begin{bsmallmatrix}
        C_i \\
        C_i A \\
        \vdots \\
        C_i A^{l}
    \end{bsmallmatrix},  F_i = \begin{bsmallmatrix}
        0 & 0 & \cdots &  0 & 0 \\
        C_iB & 0 & \cdots & 0 & 0  \\
        \vdots & & \ddots & & 0 \\
        C_iA^{l-1}B &  C_iA^{l-2}B & \cdots & C_i B & 0
    \end{bsmallmatrix},     
    \begin{aligned}
    Y_i &  = \widetilde{Y}_i - F_i U \\
      &  \text{for } i \in [p].
    \end{aligned}
\end{equation}
\end{figure*}

\setcounter{equation}{0}

\subsection{Related Works}
 Several defense strategies have been investigated in the literature to mitigate the effects of sensor attacks. One common approach in the cyber-physical security community is to conduct anomaly detection on individual sensor outputs by constantly monitoring certain statistical quantities \cite{tharayil2020sensor}. However, this defense strategy will fail if the attack signal is intelligent, for example, a replay attack \cite{zhu2013performance}. On the other hand, since onboard sensor measurements are from the same physical process, they should be consistent with system dynamics if not altered by the attacker. Leveraging this idea, several attack/fault identification algorithms are proposed in the literature for various sensor attack scenarios\cite{de2015input,zhu2013performance,smith2015covert,mo2010false}. 
 
 In this work, we focus on the sensor spoofing attack model, that is, the output of the attacked sensors can be arbitrarily modified by the attacker. Existing works \cite{fawzi2014secure,shoukry2015event} have exact characterization on the worst-case number of sensors that can be attacked while the true state can still be recovered and several algorithms to reconstruct the state. Recently, severe sensor attack scenarios have been considered in \cite{tan2024safety}, in which case the ground-true state may not be able to be uniquely determined. These plausible states are characterized by combinatorial algorithms and fed into a safety filter to guarantee closed-loop safety. A computationally efficient algorithm for determining the safety condition is investigated \cite{tan2025computationally}. However, all aforementioned results are developed without considering the effects of measurement noise on those attack-free sensors.

Until recently, only a few papers have started to look into the closed-loop system safety guarantee under various sensor attack scenarios. In \cite{lin2022plug}, a set of safeguarded sensors (i.e., sensors that can not be attacked) are assumed available for the design of the so-called secondary controller for safety. \cite{zhang2022safe} assumes the existence of several groups of sensors and the sensor faults are detected per sensor groups instead of individual sensors. In a recent work \cite{arnstrom2024data}, the authors explicitly analyze the effects of stealthy sensor attacks on deactivating existing safety filters. The assumptions made by these works do not hold true under the aforementioned sensor spoofing attack model that we adopt.

\subsection{Contributions}

In this work, we propose the \emph{secure safety filter}: an extension of the traditional safety filter that ensures safety for systems under sensor spoofing attacks. 
The proposed formulation has the following features:
\begin{enumerate}
    \item The secure safety filter accounts for bounded measurement noise on attack-free sensors. Our analysis shows this results in ball-like inflation of the plausible states identified in previous works.
    \item The secure safety filter applies a robust control barrier function condition that handles the inflated plausible states to formally guarantee safety.
    \item The secure safety filter can be applied to nonlinear systems leveraging reduced-order models. We demonstrate its effectiveness on a drone system.
\end{enumerate}

The paper is organized as follows. Section~\ref{sec:prem} recalls existing results in secure state reconstruction and safety filter design. Section~\ref{sec:secfilter} investigates the secure safety filter design in the presence of measurement noise. Details on how to apply the proposed secure safety filter to nonlinear drone dynamics and inject sensor attacks on the drone are presented in Section~\ref{sec:drone_dyn}. SITL simulation and experimental results are reported in Section~\ref{sec:results}. We then conclude the paper in Section~\ref{sec:conclusion}.

\section{Preliminaries}
\label{sec:prem}

Consider a discrete-time linear system model under sensor spoofing attacks:
\begin{equation}\label{eq:system}
    \begin{aligned} 
    x(t+1) & = Ax(t) + Bu(t),\\
     y(t)  & = Cx(t) + e(t), 
     \end{aligned}
\end{equation}
where $x\in \mathbb{R}^n$, $u\in \mathbb{R}^m$, $y\in \mathbb{R}^p$, and $e\in\mathbb{R}^p$ are the state, the input, the sensor measurement signals, and the sensor attack signals, respectively. In this work, each entry of the measurement $y$ is taken as a sensor output. Sensor $i$ is under \textit{sensor spoofing attack} if there exists time $t$ such that $e_i(t)\neq 0$. Our main objective is to endow the system with the ability to maintain safety in the presence of (up to) $s$ spoofing attacks. Sensor spoofing attack is a generic attack model since no additional restrictions are imposed upon the attacker.
\subsection{Secure state reconstruction (SSR)}

Secure state reconstruction~\cite{fawzi2014secure,shoukry2015event,tan2024safety}, as a countermeasure to spoofing attacks studied in the literature, relies on past input-output data from the system. See Fig.\ref{fig:control_diagram} for an illustration. At time instant $t$, we consider the most recent input-output data of length $l+1$: 
\begin{equation}
\begin{aligned}
    \mathcal{D}_t = (& u(t-l),u(t-l+1), \dots, u(t-1),   \\ & y(t-l),y(t-l+1), \dots, y(t))\in \mathbb{R}^{ml + p(l+1)}
\end{aligned}   
\end{equation}
with $l+1\geq n$. The true ``initial" state of the system, $x(t-l)$, is consistent with the data $\mathcal{D}_t $ when the output data from the $s$ attacked sensors is ignored. Mathematically, the initial state $x(t-l)$ is a member of the set $\myset{X}_{t}^{t-l,\Gamma} = \{x\in \mathbb{R}^n: \mathcal{O}_{i}x = Y_{i} \textup{ for } i \in \Gamma \}$ where $\Gamma$ is the set of indices of attack-free sensors, and the matrices $\mathcal{O}_i$ and $Y_i$ are given in \eqref{eq:matrices} at the bottom of the current page. Since the indices of intact sensors are not known, secure state reconstruction enumerates through all possible sensor combinations $\Gamma\in\mathbb{C}_{p}^{p-s}$, where $\mathbb{C}_{p}^{p-s}$ consists of all subsets of $[p]=\{1,2,\ldots,p\}$ with size $(p-s)$, to produce the set of all \textit{plausible initial states}:
    \begin{equation} \label{eq:X0set}
        \myset{X}_t^{t-l} = \bigcup_{\Gamma = \{i_1, \ldots, i_{p-s}\} \in \mathbb{C}_{p}^{p-s}} 
 \myset{X}_{t}^{t-l,\Gamma}. 
    \end{equation}

\setcounter{equation}{4}

  We can then 
forward propagate this set to obtain the set of all plausible states  at the current time $t$:
\begin{multline}
    \label{eq:X_t}
    \myset{X}_t^{t} = A^{l}( \myset{X}_{t}^{t-l}) + A^{l-1}Bu(t-l) +  \ldots + Bu(t-1).
\end{multline}

Throughout this work, we restrict our discussion to the case that the system \eqref{eq:system} is \emph{$s$-sparse observable}, i.e., the system remains observable after removing measurements from $s$ arbitrary sensors. The set $\myset{X}_t^t $ contains finitely many plausible states in this scenario, as proved in~\cite{tan2024safety} where more thorough discussions of secure state reconstruction can be found.

\subsection{Safety filter design}

We recall the safety filter design using the control barrier function (CBF) framework \cite{Ames2017, agrawal2017discrete}. Suppose that the state of system \eqref{eq:system} needs to satisfy a safety constraint $x(t)\in \myset{C} = \{x\in \mathbb{R}^n:  h(x):= Hx + q \geq 0 \}$ for all time. Given a nominal control signal $t\mapsto u_{\textup{nom}}(t)$ with current state $x(t)$, the \emph{CBF safety filter} \cite{Ames2019control} is implemented as: 
\begin{align}
         u_{\text{safe}}(t)  = \quad  & \argmin_u \| u - u_{\textup{nom}}(t) \|^2  \label{eq:qp_control_cost}\\
       \mathrm{s.t.} \quad  &   H(Ax(t) + Bu) + q  \geq (1-\gamma) (Hx(t) + q), \label{eq:qp_control_constraint}
\end{align}
where $ 0< \gamma <1$ is a constant scalar. The filter effectively picks an input $u$ with the least deviation, pointwise, from the nominal one to satisfy the CBF condition in \eqref{eq:qp_control_constraint}. From CBF theory, this safety filter produces state trajectories contained in $\mathcal{C}$, i.e., the system is safe.

\section{Secure safety filter design}
\label{sec:secfilter}
This section presents our proposed secure safety filter design for systems with measurement noise. The design comprises a secure state reconstruction submodule and a robust safety filter submodule. We first provide a theoretical analysis of each submodule, and then summarize the overall secure safety filter design in a concise algorithm and formalize its theoretical guarantees in a theorem. 

\subsection{SSR under measurement noise}
In robotic systems, measurement noise is inevitable, but this has not been considered in existing secure state reconstruction literature \cite{fawzi2014secure,shoukry2015event,tan2024safety}. To account for this effect, we consider the following system model:
\begin{equation}\label{eq:disturbed_system}
    \begin{aligned}
    x(t+1) & = Ax(t) + B(u(t)),\\
     y(t)  & = Cx(t) + e(t) + d(t), 
     \end{aligned}
\end{equation}
where $d$ is the measurement noise. We assume that the measurement noise is bounded in all sensors:  
\begin{equation} \label{eq:disturbance_bound}
   \|d(t) \|_{\infty} \leq d_{{\max}} \text{ for any } t.
\end{equation}

With measurement noises, the input-output history $\mathcal{D}_t $ satisfies the following equality:
\begin{equation} \label{eq:disturbed_system_compact2}
     Y_i = \mathcal{O}_i x(t-l)+ E_i + D_{i}, \ i\in [p].
\end{equation}
where $D_{i} := [d_{i}(t-l), d_{i}(t-l+1), \ldots, d_{i}(t)]^\top$. Since the $D_{i}$ is unknown, a state $x$ is a \textit{plausible initial state at time $t-l$ under measurement noise} when equation \eqref{eq:disturbed_system_compact2} holds with $E_i = 0$ for at least $p-s$ sensors and with a measurement noise satisfying its bound $\| D_{i}\|_{\infty}\leq d_{{\max}}$.
We establish the following result.
\begin{lemma} \label{lem:X0d}
    Let $\myset{X}_{t,d}^{t-l}$ be the set of plausible initial states under measurement noise. Then we have:
    \begin{equation} \label{eq:X0set_disturb}
        \myset{X}_{t,d}^{t-l} = \bigcup_{\Gamma = \{i_1, \ldots, i_{p-s}\} \in \mathbb{C}_{p}^{p-s}} 
 \myset{X}_{t,d}^{t-l,\Gamma} 
    \end{equation}
  with   $$ \myset{X}_{t,d}^{t-l,\Gamma} = \{x\in \mathbb{R}^n: \| \mathcal{O}_{i}x - Y_{i} \|_{\infty} \leq d_{{\max}} \textup{ for } i \in \Gamma \}.$$
\end{lemma}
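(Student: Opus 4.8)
The plan is to establish the set equality \eqref{eq:X0set_disturb} by double inclusion, unpacking the definition of a plausible initial state under measurement noise stated just above the lemma. The essential subtlety is that this definition is \emph{existential}---it requires at least $p-s$ attack-free sensors to be consistent with the noisy data---whereas the right-hand side enumerates subsets $\Gamma$ of \emph{exactly} size $p-s$. Bridging this gap is the combinatorial core of the argument, and I would isolate it from the start.

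For the inclusion $\myset{X}_{t,d}^{t-l} \subseteq \bigcup_{\Gamma} \myset{X}_{t,d}^{t-l,\Gamma}$, I would take an arbitrary plausible state $x$. By definition there is a set $S\subseteq[p]$ of attack-free sensors with $|S|\geq p-s$ for which \eqref{eq:disturbed_system_compact2} holds with $E_i=0$ and some $D_i$ satisfying $\|D_i\|_\infty\leq d_{\max}$. Since $E_i=0$ on $S$, the equation reduces to $Y_i-\mathcal{O}_i x = D_i$, hence $\|\mathcal{O}_i x - Y_i\|_\infty \leq d_{\max}$ for every $i\in S$. Because $|S|\geq p-s$, I can choose any size-$(p-s)$ subset $\Gamma\subseteq S$; all constraints defining $\myset{X}_{t,d}^{t-l,\Gamma}$ then hold, so $x$ lies in that set and therefore in the union.

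For the reverse inclusion I would fix $\Gamma\in\mathbb{C}_p^{p-s}$ and take $x\in\myset{X}_{t,d}^{t-l,\Gamma}$. Setting $D_i:=Y_i-\mathcal{O}_i x$ and $E_i:=0$ for each $i\in\Gamma$ yields a valid decomposition $Y_i=\mathcal{O}_i x+E_i+D_i$ with $\|D_i\|_\infty\leq d_{\max}$ by the defining constraint, while the remaining $s$ sensors may be treated as attacked, letting $E_i$ absorb any residual. Since $|\Gamma|=p-s$ sensors are thereby declared attack-free and consistent, $x$ satisfies the definition of a plausible initial state under measurement noise.

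No single step is genuinely difficult; the main thing to get right is the equivalence between ``at least $p-s$ consistent sensors'' and ``some exactly-$(p-s)$ subset of consistent sensors,'' which holds precisely because any superset of a consistent sensor set remains consistent and every set of size $\geq p-s$ contains a size-$(p-s)$ subset. I would also remark that the bound \eqref{eq:disturbance_bound} enters only through the $\ell_\infty$ inequality $\|\mathcal{O}_i x - Y_i\|_\infty \leq d_{\max}$, which is exactly the relaxation of the noise-free equality $\mathcal{O}_i x = Y_i$ appearing in \eqref{eq:X0set}. This makes \eqref{eq:X0set_disturb} the natural ``ball-like inflation'' of the plausible-state set promised in the contributions, and the proof itself is essentially a bookkeeping translation of the definition.
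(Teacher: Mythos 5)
Your proof is correct and follows essentially the same double-inclusion argument as the paper, using the identical substitutions $D_i = Y_i - \mathcal{O}_i x$ and $E_i = 0$ for $i \in \Gamma$ in both directions. The only difference is that you explicitly handle the ``at least $p-s$ consistent sensors'' versus ``exactly $p-s$'' distinction by selecting a size-$(p-s)$ subset, a minor bookkeeping point the paper's proof glosses over by directly taking $\Gamma'$ to have size $p-s$.
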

\begin{proof}

    For any $x\in \bigcup_{\Gamma \in \mathbb{C}_{p}^{p-s}} 
 \myset{X}_{t,d}^{t-l,\Gamma}  $, it must belong to $\myset{X}_{t,d}^{t-l,\Gamma'} $ with a specific index set $\Gamma'$. That is, the state $x$ satisfies $\| \mathcal{O}_{i}x - Y_{i} \|_{\infty} \leq d_{{\max}} \textup{ for } i \in \Gamma'$. For this state, we also know \eqref{eq:disturbed_system_compact2} holds with measurement noise $D_{i} = Y_{i}-\mathcal{O}_{i}x$ and with $E_i=0$ for all $p-s$ sensors indexed in $\Gamma'$. Since $\|D_{i}\|_\infty \leq d_{{\max}}$, the state is a plausible initial state, i.e., $x\in \myset{X}_{t,d}^{t-l}$. 

    For any $x\in \myset{X}_{t,d}^{t-l}$, denote the set of attack-free sensors with size $p-s$ by $\Gamma'$. For $i\in\Gamma'$, \eqref{eq:disturbed_system_compact2} must hold with $E_i = 0$, so we have $D_{i} = Y_{i}-\mathcal{O}_{i}x$. Since $\| D_{i}\|_{\infty}\leq d_{{\max}}$, it follows that  $x\in \myset{X}_{t,d}^{t-l,\Gamma'}\subseteq \bigcup_{\Gamma = \in  \mathbb{C}_{p}^{p-s}}  \myset{X}_{t,d}^{t-l,\Gamma} $. 
\end{proof}

Lemma \ref{lem:X0d} accurately characterizes the plausible initial state set. Equation \eqref{eq:X0set_disturb} offers a decomposition of the set into smaller and more manageable sets. In particular, for each index combination $\Gamma$, two plausible states $x_1,x_2\in\myset{X}_{t,d}^{t-l,\Gamma}$ must satisfy:
\begin{equation}
\label{eq:tri_ineq}
    \|\mathcal{O}_i(x_1-x_2)\|\leq \|\mathcal{O}_ix_1- Y_i\|+\|\mathcal{O}_ix_2-Y_i\|
\end{equation}
from the triangle inequality. Therefore, if we can find a plausible state in the set $x_d^{t-l,\Gamma}\in\myset{X}_{t,d}^{t-l,\Gamma}$, plausible states associated to the sensor combination $\Gamma$ can be bounded. 

  We propose the following linear program for finding a plausible state $x_d^{t-l,\Gamma}\in\myset{X}_{t,d}^{t-l,\Gamma}$ for a sensor combination $\Gamma \in \mathbb{C}_{p}^{p-s} $:
\begin{equation} \label{eq:lp_x_d}
    \begin{aligned}
        (x_d^{t-l,\Gamma}, \underline{d}^{\Gamma}) = & \argmin_{x \in \mathbb{R}^n, \  d^\Gamma\in \mathbb{R}} \quad d^\Gamma \\
        \text{ subject to } & -d^\Gamma \bm{1} \leq  \mathcal{O}_{\Gamma}x - Y_{\Gamma} \leq d^\Gamma \bm{1}
    \end{aligned}
\end{equation}
where $\mathcal{O}_{\Gamma} := [\mathcal{O}_{i_1}; \ldots; \mathcal{O}_{i_{p-s}}]$ and $Y_{\Gamma} := [Y_{i_1}; \ldots; Y_{i_{p-s}}]$. Here the solution $(x_d^{t-l,\Gamma}, \underline{d}^{\Gamma})$ always exists, and if $\underline{d}^{\Gamma}>d_{\max}$, there are no plausible states associated with the sensor combination $\Gamma$. On the other hand, if $\underline{d}^{\Gamma}\leq d_{{\max}}$, we derive from \eqref{eq:tri_ineq} that:
 \begin{equation} \label{eq:X_td_Gamma}
     \myset{X}_{t,d}^{t-l,\Gamma} \subseteq 
           \{x\in \mathbb{R}^n:  \| \mathcal{O}_{\Gamma} (x-x_d^{t-l,\Gamma})\|_{\infty}\leq d_e^{\Gamma} \}
 \end{equation}
where $d_e^{\Gamma}: = d_{{\max}} + \underline{d}^{\Gamma}$. 
In addition, we can derive a more conservative but easier set representation:
$$
\myset{X}_{t,d}^{t-l,\Gamma} \subseteq  \{ x_{d}^{t-l,\Gamma}\}+ \{ e\in \mathbb{R}^n: \| e\|_{\infty}\leq d_e^{\Gamma} /m(\mathcal{O}_{\Gamma}) \},
$$ 
where $m(\mathcal{O}_{\Gamma}) = \inf_{\| a\|_{\infty}= 1}  \| \mathcal{O}_{\Gamma} a \|_{\infty}$ is the minimum modulus of the matrix $\mathcal{O}_{\Gamma}$ with respect to the $\infty$-norm.  

The current-time plausible state set $\myset{X}_{t,d}^t$ can be computed by forward propagating the set   $\myset{X}_{t,d}^{t-l} $ similar to \eqref{eq:X_t}. Furthermore, one derives that $\myset{X}_{t,d}^t = \bigcup_{\Gamma  \in \mathbb{C}_{p}^{p-s}}  \myset{X}_{t,d}^{t,\Gamma} $. For the case $d_e^{\Gamma} \leq 2d_{{\max}}$: 
\begin{equation} \label{eq:X_td_tGamma}
     \hspace{-2mm}\myset{X}_{t,d}^{t,\Gamma} \subseteq \{ x_{d}^{t,\Gamma}\}+ \{ e\in \mathbb{R}^n: \| e\|_{\infty}\leq \| A\|_{\infty}^l d_e^{\Gamma} /m(\mathcal{O}_{\Gamma}) \}
\end{equation}
with $x_{d}^{t,\Gamma} = A^l x_{d}^{t-l,\Gamma}  + A^{l-1}Bu(t-l) +  \ldots + Bu(t-1)$; for the case $d_e^{\Gamma} < 0$,  $\myset{X}_{t,d}^{t,\Gamma} = \emptyset$.  Intuitively, the plausible state set $\myset{X}_{t,d}^{t} $ gets inflated from a set of points as in $\myset{X}_{t}^{t} $ to a set of ball-like regions in terms of $\infty$-norm distance.

\subsection{Robust safety filter design}
To guarantee the safety of the system, we need to account for all plausible states at the current time. Now, the discrete-time CBF condition becomes:
\begin{equation} \label{eq:cbf_cond_under_disturbance}
    H(Ax+ Bu) + q \geq (1-\gamma)(Hx + q), \forall x\in     \myset{X}_{t,d}^{t}.
\end{equation}
As we have over-approximated the set $\myset{X}_{t,d}^{t}$ with a union of $\infty$-norm balls, our strategy involves rewriting the constraint into multiple ones.
For each set of sensors $\Gamma\in \mathbb{C}_{p}^{p-s}$ with $d_e^{\Gamma} \geq 0$, the respective CBF condition is:
\begin{equation}
    \begin{aligned}
        HBu + H(A - (1-\gamma)I)x + \gamma q \geq 0 \quad \forall x\in   \myset{X}_{t,d}^{t,\Gamma} 
    \end{aligned}
\end{equation}
This way, we may take the robust CBF approach \cite{jankovic2018robust} and derive a sufficient condition:
\begin{equation} \label{eq:cbf_cond_under_disturbance_robust}
    HBu + H(A - (1-\gamma)I)x_d^{t,\Gamma} + \gamma q  - \Delta_{\Gamma} \bm{1} \geq 0 
\end{equation}
where $\Delta_{\Gamma} := \| H(A - (1-\gamma)I)\|_{\infty} \| A\|_{\infty}^l d_e^{\Gamma} /m(\mathcal{O}_{\Gamma}) $, to avoid considering an infinite number of states in our constraint.

The end result is the secure safety filter with constraints parametrized by a finite number of states:
\begin{equation} \label{eq:secure_safety_filter}
    \begin{aligned}
        u_{\textup{safe}}(t) =  \quad &  \argmin_{u} \ \|  u - u_{\text{nom}}(t)\|^2 \\
        \mathrm{s.t.} \quad  &  HBu + H(A - (1-\gamma)I)x_d^{t,\Gamma} + \gamma q  - \Delta_{\Gamma} \bm{1} \geq 0 
    \end{aligned}
\end{equation}
for all $\Gamma\in \mathbb{C}_{p}^{p-s}$ with $d_e^{\Gamma} \geq 0$.

Following the above analysis, an algorithmic description of the secure safety filter is given in Alg. \ref{alg:secure_safety_filter}. The safety guarantee of the proposed secure safety filter is formalized below. The proof follows from Lemma~\ref{lem:X0d} and standard analysis of CBF-based safety filter design and is thus omitted.

\begin{thm}
    Suppose that the secure safety filter \eqref{eq:secure_safety_filter} is always feasible. Then, the control sequence $u_{\textup{safe}}(t), t\in \mathbb{N},$ will render the system safe.
\end{thm}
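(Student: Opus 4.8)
The plan is to exploit the fact that, whatever the attacker does, the true state always lies inside one of the over-approximated plausible cells, so that one of the finitely many constraints in \eqref{eq:secure_safety_filter} coincides with the robust CBF condition evaluated along the true trajectory. I would then close the argument with the standard discrete-time CBF forward-invariance induction. Throughout I assume the usual initial-safety condition $x(0)\in\myset{C}$, which is implicit in the statement.

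First I would pin down the true attack-free sensor set. At each time $t$ at most $s$ sensors are spoofed, so there is an index set $\Gamma^\star \in \mathbb{C}_p^{p-s}$ of genuinely attack-free sensors, for which $E_i=0$ in \eqref{eq:disturbed_system_compact2}. Hence the true initial state $x(t-l)$ satisfies $\|\mathcal{O}_i x(t-l) - Y_i\|_\infty \le d_{\max}$ for all $i\in\Gamma^\star$, i.e. $x(t-l)\in\myset{X}_{t,d}^{t-l,\Gamma^\star}$ in the notation of Lemma~\ref{lem:X0d}. Forward-propagating through the noiseless dynamics \eqref{eq:disturbed_system} places the current true state in the corresponding propagated cell, $x(t)\in\myset{X}_{t,d}^{t,\Gamma^\star}$. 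In particular this cell is nonempty, equivalently $\underline{d}^{\Gamma^\star}\le d_{\max}$, so the constraint indexed by $\Gamma^\star$ is one of those enforced by \eqref{eq:secure_safety_filter}.

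Next I would show that satisfying the $\Gamma^\star$-constraint implies the exact CBF decrease condition at the true state. By the over-approximation \eqref{eq:X_td_tGamma}, every $x\in\myset{X}_{t,d}^{t,\Gamma^\star}$ obeys $\|x-x_d^{t,\Gamma^\star}\|_\infty \le \|A\|_\infty^l d_e^{\Gamma^\star}/m(\mathcal{O}_{\Gamma^\star})$, and the tightening term $\Delta_{\Gamma^\star}$ is exactly $\|H(A-(1-\gamma)I)\|_\infty$ times this radius. Thus the robustified inequality \eqref{eq:cbf_cond_under_disturbance_robust} evaluated at the representative $x_d^{t,\Gamma^\star}$ dominates the un-tightened condition \eqref{eq:cbf_cond_under_disturbance} evaluated at $x(t)$, the main calculation being the rowwise bound $|H(A-(1-\gamma)I)(x(t)-x_d^{t,\Gamma^\star})|\le \Delta_{\Gamma^\star}\bm{1}$ obtained from the induced-norm inequality $\|Mv\|_\infty\le\|M\|_\infty\|v\|_\infty$. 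Since feasibility is assumed, $u_{\textup{safe}}(t)$ satisfies every constraint of \eqref{eq:secure_safety_filter}, in particular the $\Gamma^\star$ one, so $H(Ax(t)+Bu_{\textup{safe}}(t))+q \ge (1-\gamma)(Hx(t)+q)$, i.e. $h(x(t+1))\ge(1-\gamma)h(x(t))$.

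Finally I would run the induction: if $h(x(t))\ge 0$ then $h(x(t+1))\ge(1-\gamma)h(x(t))\ge 0$ because $0<\gamma<1$, so starting from $x(0)\in\myset{C}$ one gets $x(t)\in\myset{C}$ for all $t\in\mathbb{N}$, which is the claimed safety. The only genuinely delicate step is the worst-case robustification of the third paragraph; everything else is bookkeeping and the standard CBF recursion. The one conceptual point worth stating carefully is that the decisive constraint corresponds to the \emph{true} attack-free set $\Gamma^\star$, whose identity is unknown to the filter — but this is harmless precisely because \eqref{eq:secure_safety_filter} enforces the condition simultaneously over \emph{all} admissible $\Gamma$, and $\Gamma^\star$ is guaranteed to be among them.
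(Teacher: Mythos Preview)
Your argument is correct and matches exactly what the paper has in mind: it simply states that the proof follows from Lemma~\ref{lem:X0d} together with standard CBF forward-invariance analysis, which is precisely the structure you lay out (true state lies in $\myset{X}_{t,d}^{t,\Gamma^\star}$ for the genuine attack-free index set $\Gamma^\star$, the $\Gamma^\star$-constraint is active in \eqref{eq:secure_safety_filter}, the tightening $\Delta_{\Gamma^\star}$ absorbs the discrepancy $x(t)-x_d^{t,\Gamma^\star}$, and then the discrete CBF recursion closes the induction). Your write-up is in fact more detailed than the paper's, which omits the proof entirely.
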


\begin{algorithm}[h]
\caption{Secure safety filter}
\label{alg:secure_safety_filter}
\begin{algorithmic}[1]
\Require{ nominal control input $u_{\textup{nom}}(t)$ and input-output data $\mathcal{D}_t$ of length $l+1\geq n$ }
\Ensure{ safe control input  $u_{\textup{safe}}(t)$}
\For{each  $\Gamma \in \mathbb{C}_{p}^{p-s} $}
\State Compute $x_d^{t-l,\Gamma}, \underline{d}^{\Gamma}$ according to \eqref{eq:lp_x_d}
\If{$ d_{{\max}} \geq \underline{d}^{\Gamma}$}
\State Forward propagate $x_d^{t-l,\Gamma}$ to obtain $x_d^{t,\Gamma}$ 
\State Collect the robust CBF condition \eqref{eq:cbf_cond_under_disturbance_robust} 
\EndIf
\EndFor
\State Formulate and solve the QP problem \eqref{eq:secure_safety_filter} with collected CBF conditions from Step 5
\State \Return the safe input $u_{\textup{safe}}(t)$
\end{algorithmic}
\end{algorithm}

The feasibility assumption in the above result is challenging to verify in general. For certain sensor spoofing attacks, the system is doomed to take the risk of being unsafe regardless what control is in use.  In \cite{tan2024safety}, a thorough analysis on the feasibility in the absence of measurement noise was provided. Extending that result here is not trivial and requires future endeavors. In the following drone experiments with geofencing constraints, we switch to a safe but conservative zero velocity command when the secure safety filter is infeasible.

\section{Drone dynamics and sensor attacks}
\label{sec:drone_dyn}

This section outlines the application of the secure safety filter to drones via reduced-order models. 

\subsection{Reduced-order model for drones}
Quadrotor drones are intrinsically nonlinear dynamical systems \cite{mahony2012multirotor}. To apply our proposed secure safety filter, we must first develop a linear model for the drone. One common approach for approximating nonlinear systems using a linear model is local linearization of the dynamics around an equilibrium point (hovering state in the quadrotor case). However, such model does not work well with safe navigation task, because the linear approximation is only valid near the equilibrium point.

Instead,
practical quadrotor control systems usually has a cascaded control architecture with a faster inner attitude control loop and a slower outer velocity and position tracking loop \cite{meier2015px4}. Existing off-the-shelf controllers, such as the open-source PX4 Autopilot, can track velocity set-points well. For the purpose of safe navigation tasks, we can rely on the inner loop to handle nonlinearities and approximate the dynamics of a drone navigating in a horizontal plane using the following reduced-order model:
\begin{equation} \label{eq:reduced_order_model}
    \begin{bmatrix}
        \dot{x}_1 \\
        \dot{v}_1 \\
        \dot{x}_2 \\
        \dot{v}_2
    \end{bmatrix} = \begin{bmatrix}
        0 & 1 & 0 & 0 \\
        0 & -\frac{1}{\tau_s} & 0 & 0 \\
        0 & 0 & 0 & 1 \\
        0 & 0 & 0 & -\frac{1}{\tau_s}
    \end{bmatrix} \begin{bmatrix}
        x_1 \\
        v_1 \\
        x_2 \\
        v_2
    \end{bmatrix} + \begin{bmatrix}
        0  & 0\\
        \frac{1}{\tau_s}  & 0\\
        0 & 0 \\
        0 & \frac{1}{\tau_s}
    \end{bmatrix} \begin{bmatrix}
        u_1 \\
        u_2 
    \end{bmatrix}
\end{equation}
where $\tau_s$ is a time constant, $x_1, x_2$, $ v_1, v_2$, and $u_1, u_2$ are positions, velocities, velocity set-points (commands) of $\bm{x},\bm{y}$-axes in a horizontal plane, respectively. This reduced-order model simplifies/abstracts the nonlinear velocity tracking dynamics (the inner loop) as a first-order system with time constant $\tau_s$ in seconds. The parameter $\tau_s$ indicates how fast the actual velocity tracks the velocity command and is determined before our flight simulations and tests. Our discrete-time linear system is then computed based on the discretization of the continuous-time system~\eqref{eq:reduced_order_model} using a zero-order hold (ZOH) of sampling time $T_s = 0.05$ seconds.

\subsection{Measurement model and sensor attack emulation}

As mentioned above, the drone motion is limited to two-dimensions along the $\bm{x}$ and $\bm{y}$ axes, while the altitude ($\bm{z}$-position) is assumed to be constant. In both our simulation and real experiments, the reduced-order model is assumed to have access to two copies of  $\bm{x}$-position ($x_1$), $\bm{y}$-position ($x_2$), $\bm{x}$-velocity ($v_1$), and $\bm{y}$-velocity ($v_2$) of the drone. Mathematically, the measurement model is:
\begin{equation} \label{eq:measurement_model_with_ROM}
    y(t)  = Cx(t) + e(t) + d(t) \text{ with } C = 
    \begin{bsmallmatrix}
        1 & 0 & 0 & 0 \\
        0 & 1 & 0 & 0 \\
        0 & 0 & 1 & 0 \\
        0 & 0 & 0 & 1 \\
        1 & 0 & 0 & 0 \\
        0 & 1 & 0 & 0 \\
        0 & 0 & 1 & 0 \\
        0 & 0 & 0 & 1 
    \end{bsmallmatrix}.
\end{equation}

For implementation, although these measurements can be directly obtained from onboard sensors like the IMU and GPS, the existing flight control unit usually already has a built-in Extended Kalman Filter (EKF) module that provides more accurate position and attitude estimates. This EKF state estimate is then fed into the flight controller and navigation stack, which then computes the commands for actuators. See PX4 software architecture diagram \cite{meier2015px4} for an example. For the purpose of testing and validation of our approach, the measurements from \eqref{eq:measurement_model_with_ROM} are taken from (part of) the outputs of two EKFs. As we mentioned in Section~\ref{sec:intro}, sensor attack signals can be injected into individual sensors or injected into the internal communication stack. In our test scenario, the attack is directly injected in the communication channel between the EKF module and the control stack. We note that our proposed secure safety filter 
does not need to distinguish between these two attack injection scenarios as both will affect the EKF information that influences downstream control and navigation tasks.

The sensor attacks are emulated by injecting different types of attack signals to manipulate the EKF outputs. See Fig.~\ref{fig:sitl-experiment} and Fig.~\ref{fig:hardware-experiment} for how information flows in the SITL simulation and hardware experiments. The types of attacks considered and implemented are:

\begin{enumerate}
    \item \textbf{Constant Value} - Set the measurement to a constant.
    \item \textbf{Noise} - Add a Gaussian noise to the measurement.
    \item \textbf{Scale} - Scale the measurement by a constant.
    \item \textbf{Shift} - Shift the measurement by a constant.
\end{enumerate}

We note that the reduced-order model with dynamics in \eqref{eq:reduced_order_model} and measurements in~\eqref{eq:measurement_model_with_ROM} is $1$-sparse observable, so the secure safety filter can handle any attack on one of the $8$ measurements and keeps the system safe whenever the online QP \eqref{eq:secure_safety_filter} is always feasible.

\begin{figure}[t!]
    \centering
    \includegraphics[width=0.6\linewidth]{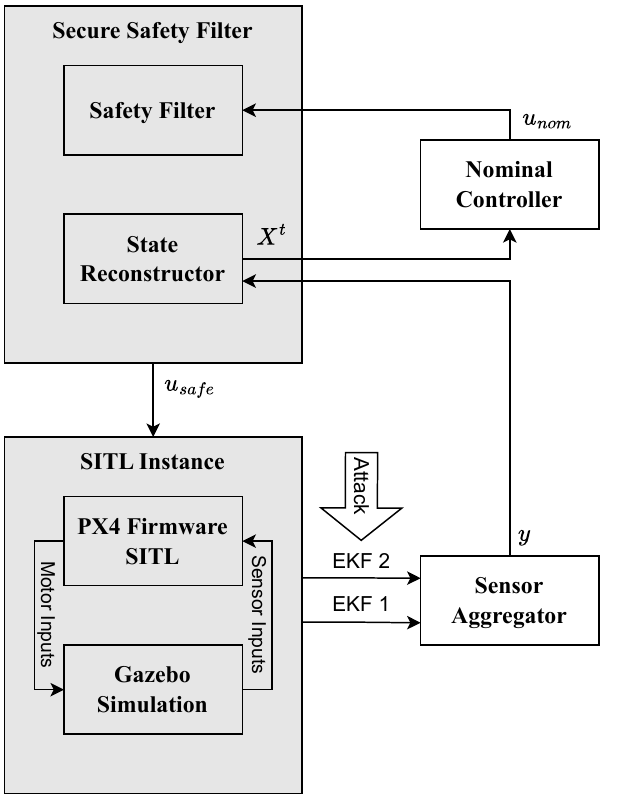}
    \caption{Diagram of Software-in-the-Loop (SITL) experiment set-up}
    \label{fig:sitl-experiment}
\end{figure}

\begin{figure}[t!]
    \centering
    \includegraphics[width=\linewidth]{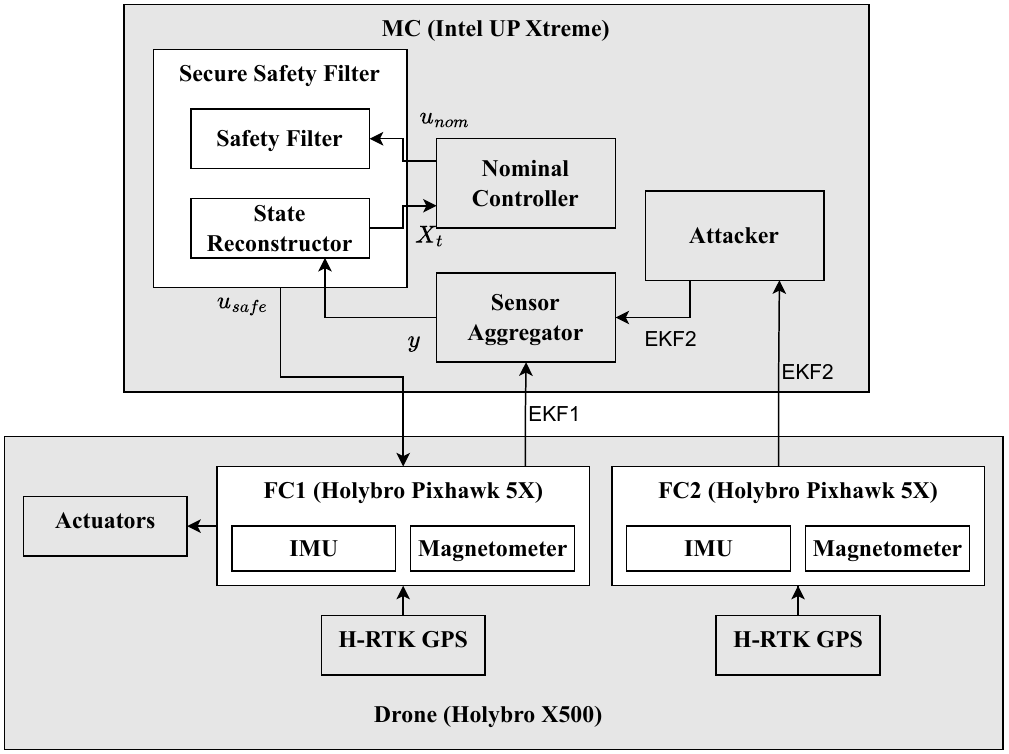}
    \caption{Diagram of hardware experiment set-up}
    \label{fig:hardware-experiment}
\end{figure}

\section{Simulation and Experiment Results}
\label{sec:results}

\subsection{Software-in-the-Loop Experiments}\label{h:sitl-experiment}

We report the Software-in-the-Loop (SITL) implementation results to verify and validate our proposed secure safety filter. Our implementation is based on ROS2 middle-ware for its distributed nature, modularity, and integration with the controller of choice, the open-source PX4-Autopilot.

PX4-Autopilot is packaged with a SITL build. The SITL simulation launches a simulated Holybro X500 drone in a Gazebo Garden simulation environment. All the onboard sensors, actuators, and drone dynamics are simulated in the high-fidelity simulation and run on a desktop computer with a 13th Gen Intel i9 and integrated graphics. Our SITL build of PX4-Autopilot is available in the organization's open-source repository. Our controller and secure-safety filter code are publicly available\footnote{Our simulation code is publicly available at 
\url{https://github.com/tiiuae/px4-secure-state-reconstruction}.}

\begin{figure*}[t]
    \centering
    	\begin{subfigure}[t]{0.3\linewidth}
		 \includegraphics[width=\linewidth]{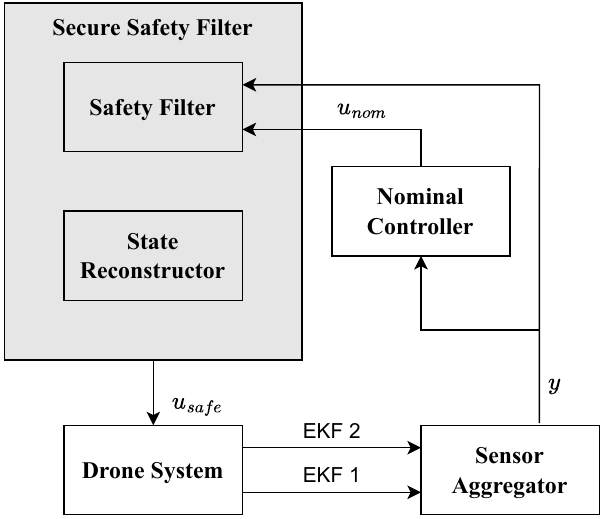}
		\caption{ Phase 1 }   
	\end{subfigure} \hspace{1mm}
	\begin{subfigure}[t]{0.3\linewidth}
		   \includegraphics[width=\linewidth]{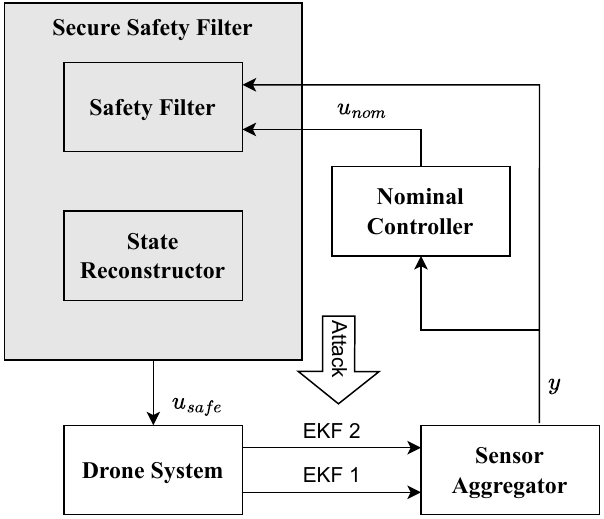}
		\caption{ Phase 2}
	\end{subfigure} \hspace{1mm}
    	\begin{subfigure}[t]{0.3\linewidth}
		   \includegraphics[width=\linewidth]{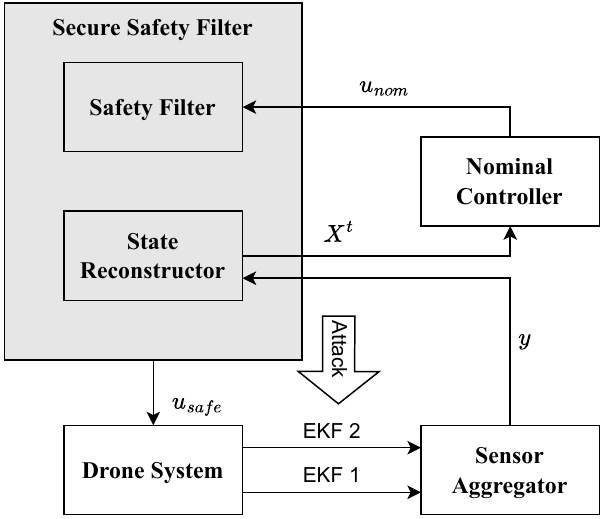}
		\caption{ Phase 3}
	\end{subfigure}
    \caption{\textbf{Phase 1} - Drone flying with a nominal controller together with a safety filter; \textbf{Phase 2} - Attack initiated on the data entering the Sensor Aggregator; \textbf{Phase 3} - Secure safety filter active and safe-guarding the nominal controller.}
    \label{fig:three-phases}
\end{figure*}

\begin{figure*}[t]
    \centering
    \includegraphics[width=\linewidth]{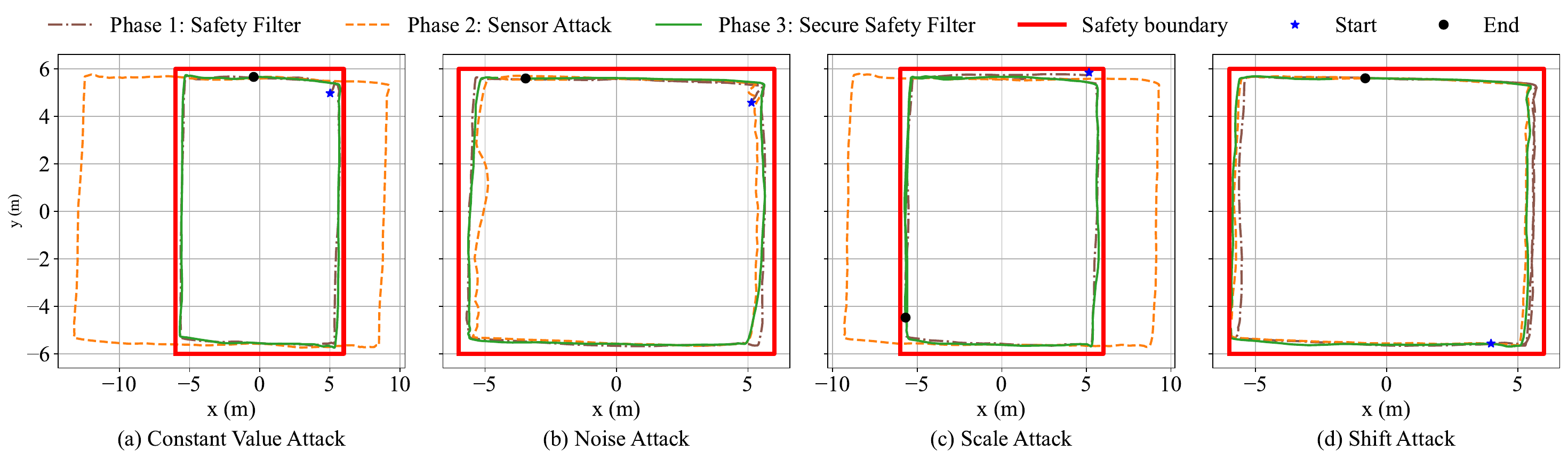}
    \caption{Quadrotor trajectories in different phases of the SITL simulation under four different attacks. For easier illustration, all the attacks are performed on $y_1$ or $y_5$ measuring $\bm{x}$-axis position. \textbf{Case (a) Constant value attack} - the measurement  is set to be constant $2$; \textbf{Case (b) Noise attack} - A Gaussian noise with zero mean and standard deviation $0.5$ is added to the measurement; \textbf{Case (c) Scale attack} -  the measurement is set to be $0.2x_1$;  \textbf{Case (d) Shift attack} - the measurement is set to be $x_1 + 0.5$. }
    \label{fig:sitl-results}
    \vspace{-10pt}
\end{figure*}

Figure \ref{fig:sitl-experiment} shows an overall diagram for our SITL implementation. Outside of the SITL Instance, each box is implemented as a ROS 2 node, and the arrows represent the information flow among these nodes. 
In the test, we activate the functionality of different nodes through three sequential phases, shown in Fig.~\ref{fig:three-phases}. Namely, there are the following three phases of experiments: 
\begin{description}
    \item[\textbf{Phase 1:}] $\quad$ The SITL drone instance, a sensor aggregator,  a nominal controller, and a safety filter are activated.
    \item[\textbf{Phase 2:}]  $\quad$ The attacker that injects attack signals on one of the EKF outputs is activated.
    \item[\textbf{Phase 3:}]  $\quad$ The secure safety filter is initialized to mitigate sensor attacks, i.e., to provide guaranteed safety under sensor attack. 
\end{description}

In the tests, the drone is tasked to continuously visit four locations $(5.7, 5.7)\rightarrow(-5.7, 5.7)\rightarrow(-5.7, -5.7)\rightarrow(5.7, -5.7)$ in a cyclic manner. The safety condition is to keep the drone flying in a square region:
\begin{equation}
    x_1 \in [-6, 6], 
    x_2 \in [-6, 6].
\end{equation}
We test four types of attacks as described in Sec.~\ref{sec:drone_dyn}.B and collect the trajectories of the drone during the three different phases, as shown in Fig.~\ref{fig:sitl-results}. 

We observe that, in all four attack cases, the attacks degrade the performance of the drone and even induce violations of the safety constraint even though a safety filter is in effect. In particular, the noise attack does not cause a safety problem because of the robustness of the safety filter. The magnitude of the shift attack is small, so the drone stays safe thanks to the buffering zone. In these two cases, the attacker fails to induce unsafe behavior of the drone. Safety violations occur in the constant-value attack case (Case (a)) and the scale attack case (Case (c)). In those cases, thanks to our secure safety filter, the drone regains safety in Phase 3 and successfully resumes its behavior from before the attack.

\begin{figure}[b!]
    \centering
   \hspace{-30pt} \includegraphics[width=0.8\linewidth]{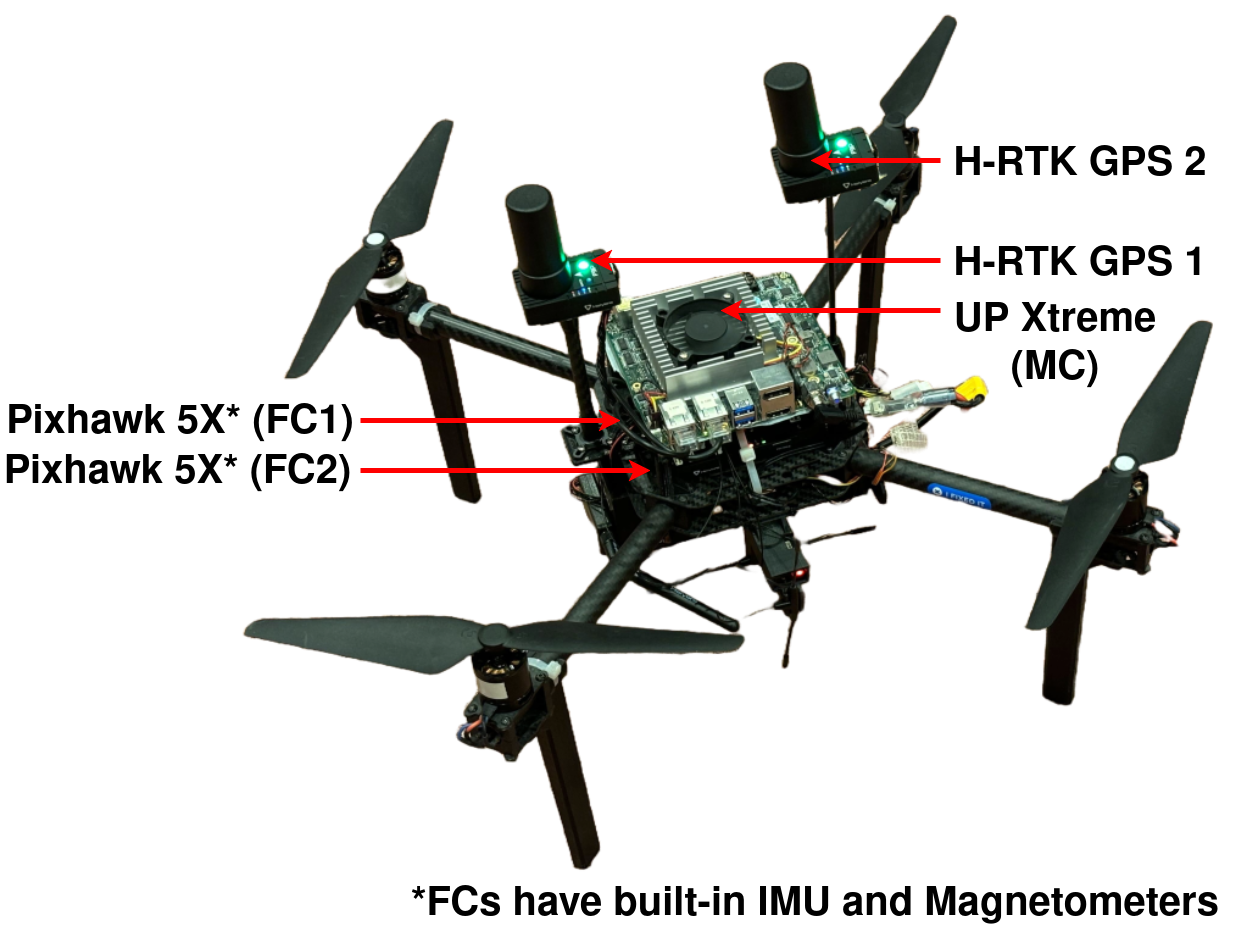}
    \caption{Holybro X500 quadrotor test platform. Two Pixhawk 5X microcontrollers, which act as Flight Controllers (FCs), and one Intel UP Xtreme mission computer (MC) are mounted onboard. The two FCs are connected to an independent set of sensors--H-RTK GPS, IMU, magnetometer, and barometer. The MC runs our customized modules including the secure safety filter, the nominal controller, the sensor aggregator, and the attacker.  }
    \label{fig:drone}
\end{figure}

\begin{figure*}[t]
    \centering
    \includegraphics[width=\linewidth]{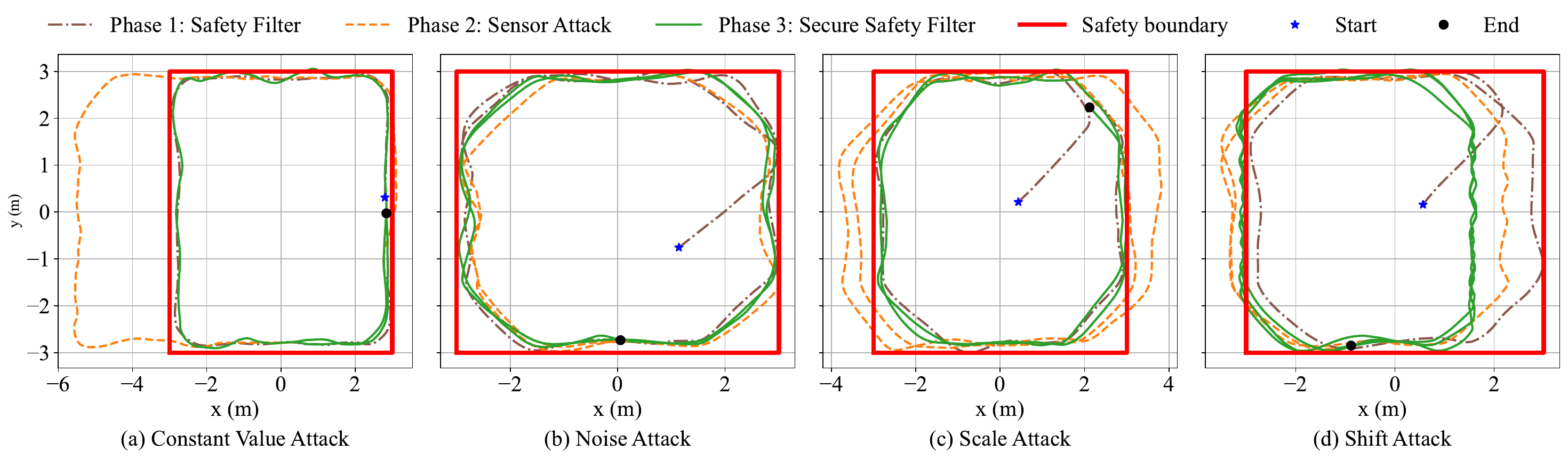}
    \begin{subfigure}[t]{\linewidth}
            \setcounter{subfigure}{4} 
         \includegraphics[width=1\linewidth]{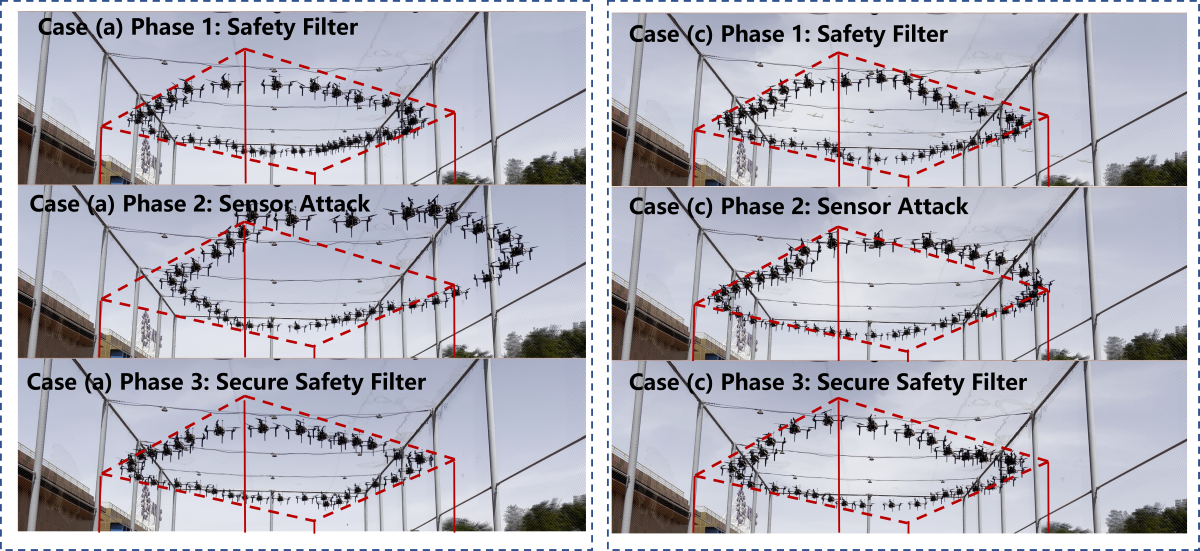} 
         \caption{Snapshots of the drone trajectories with the constant-value attack (Case (a)) and the scale attack (Case (c)).}
    \end{subfigure}
    \begin{subfigure}[t]{\linewidth} 
         \includegraphics[width=1\linewidth]{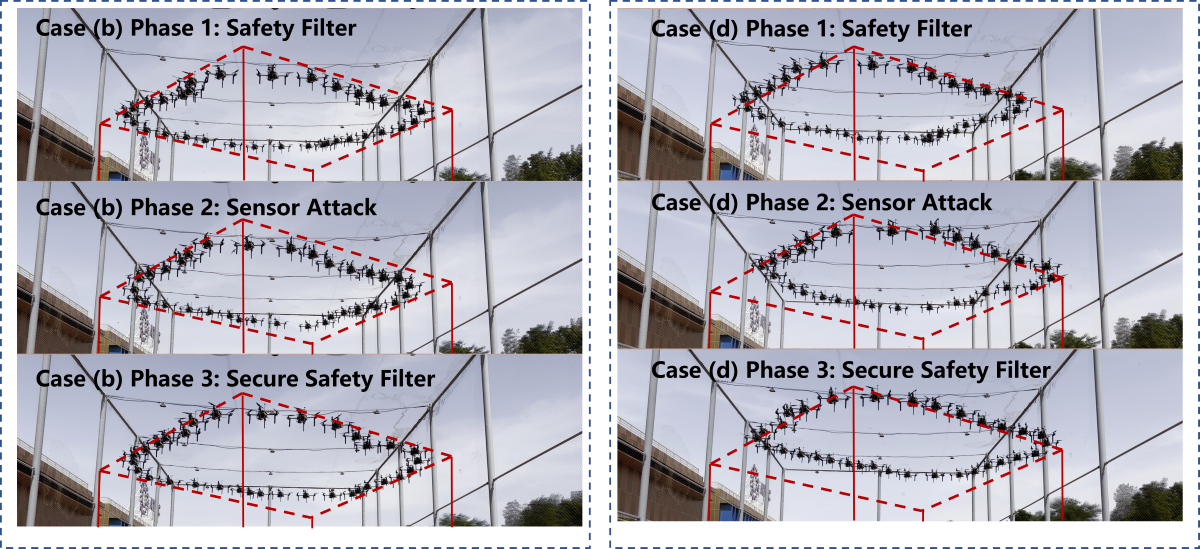} 
         \caption{Snapshots of the drone trajectories with the noise attack (Case (b)) and the shift attack (Case (d)).}
    \end{subfigure}
   
    \caption{Quadrotor trajectories in different phases of the hardware experiments under four different attacks. For easier illustration, all the attacks are performed on $y_1$ or $y_5$ measuring $\bm{x}$-axis position. \textbf{Case (a) Constant value attack} - the measurement  is set to be constant $2.5$; \textbf{Case (b) Noise attack} - A Gaussian noise with zero mean and standard deviation $0.75$ is added to the measurement; \textbf{Case (c) Scale attack} -  the measurement is set to be $0.25x_1$;  \textbf{Case (d) Shift attack} - the measurement is set to be $x_1 + 1.5$.  }
    \label{fig:hw-results}
    \vspace{-20pt}
\end{figure*}

\subsection{Hardware Experiments}

The hardware test utilizes a Holybro X500 quadrotor with minimal modifications on its landing gears to reduce its weight. See Fig.~\ref{fig:drone} for an illustration and its onboard components. Among the two onboard Flight Controllers (FCs), the primary one connects to and provides low-level control signals to the motors. The secondary FC has a separate sensing module providing the second set of position and velocity readings. The firmware flashed into the two FCs is the PX4-Autopilot \hyperlink{https://github.com/PX4/PX4-Autopilot/tree/release/1.14}{release/1.14}.

Figure \ref{fig:hardware-experiment} presents the software architecture of this experiment. As observed, the primal FC (FC1) sends the low-level control commands to the drone's actuators. Each of the two FCs connects to a separate H-RTK GPS receiving GPS corrections for high-accuracy ($\approx\pm2cm$) positioning. The attacker runs in MC and injects the same types of attacks described in Sec.~\ref{h:sitl-experiment} to the second EKF information. As a first step to reproduce our SITL simulation results, for the experimental results reported below, the EKF2 information is an exact duplicate of EKF1 from the primary FC occurring at the ROS2 node interface between the FC and MC. One of these two duplicate signals is then attacked. Future work will test different sensing and attacking configurations, e.g., two independent EKFs, and physical attacks on onboard sensors.

The hardware test follows a similar procedure as in previous subsection. We test four types of attacks, and for each attack we test the performance of the drone in three phases as shown in Fig.~\ref{fig:three-phases}. The drone is commanded to follow a quadrilateral, planar, cyclic path of $(2.8, 2.8)\rightarrow(-2.8, 2.8)\rightarrow(-2.8, -2.8)\rightarrow(2.8, -2.8)$. The safety boundary is:
\begin{equation}
    x_1 \in [-3, 3], \ x_2 \in [-3, 3].
\end{equation}
The results are presented in Fig.~\ref{fig:hw-results}. In the experiments, we see very similar behaviors as in the SITL simulation.  When the constant value attack or the scale attack occur, the safety filter does not prevent the drone from entering the unsafe region. With our proposed secure safety filter, the drone regains safety and resumes its travel as if it were attack-free. We also observe a distinguishable difference in the shift attack case: now the drone flies out of the safe region under the attack. In this case, our secure safety filter still provides safety despite the trajectory becomes conservative on the right side.

\section{Conclusions}
\label{sec:conclusion}
In this work, we proposed a secure safety filter framework for guaranteeing the safety of robotic systems under severe sensor spoofing attacks. In particular, we extended the existing secure state reconstruction results to cope with bounded measurement noise and dealt with the plausible states with a robust safety filter. We tested the effectiveness of our secure safety filter in software-in-the-loop simulations and hardware experiments on a nonlinear drone platform. The tests show that our secure safety filter correctly guards the drone away from unsafe regions even though one of the two units of sensors is under attack. 

\balance

\bibliographystyle{IEEEtran}
\bibliography{IEEEabrv,references}

\end{document}